\documentclass[letterpaper, 10 pt, conference]{ieeeconf}  
\IEEEoverridecommandlockouts                              
\usepackage{amsmath,amssymb,amsfonts}
\usepackage{algorithmic}
\usepackage{algorithm}
\usepackage{graphicx}
\usepackage{textcomp}
\usepackage{booktabs}
\usepackage{multirow}
\usepackage{bm}
\usepackage{cite}
\usepackage{url}
\usepackage{xcolor}
\usepackage{subcaption}
 
\newcommand{\R}{\mathbb{R}}
\newcommand{\btheta}{\bm{\theta}}
\newcommand{\bz}{\bm{z}}
\newcommand{\by}{\bm{y}}
\newcommand{\cW}{\mathcal{W}}
\newcommand{\cO}{\mathcal{O}}
\newcommand{\cR}{\mathcal{R}}
\newcommand{\psd}{\succeq}

\usepackage{amsmath,amssymb,amsfonts}
\usepackage{algorithmic}
\usepackage{algorithm}
\usepackage{graphicx}
\usepackage{textcomp}
\usepackage{xcolor}
\usepackage{bm}
\usepackage{mathtools}
\usepackage{booktabs}
\usepackage{multirow}
\usepackage{subcaption}
\usepackage{cite}
\usepackage{hyperref}

\DeclareMathOperator{\diag}{diag}

\newtheorem{theorem}{Theorem}
\newtheorem{proposition}{Proposition}
\newtheorem{remark}{Remark}
\newtheorem{problem}{Problem}

\usepackage{hyperref}   
\usepackage{cleveref}   
\usepackage{tikz}
\usepackage{booktabs}
\usepackage{url}
\usepackage{xcolor}
\usepackage{cite}
\usepackage[T1]{fontenc}

\title{\LARGE \bf 
Verified Task-Space Motion Planning Under Joint-Space Constraints
}

\author{Hanjiang Hu, Changliu Liu, Yebin Wang 
\thanks{H. Hu and C. Liu are with the Robotics Institute, Carnegie Mellon University. This work was done while H. Hu was a research intern at Mitsubishi Electric Research Laboratories (MERL), Cambridge, MA 02139, USA. (email: \tt\small hanjianghu@cmu.edu).}
\thanks{Y. Wang is with the MERL. (email: \tt\small yebinwang@ieee.org).}
}

\begin{document}

\maketitle
\thispagestyle{empty}
\pagestyle{empty}

\begin{abstract}
Reactive task-space planners such as Bug2 operate with fixed Cartesian
step sizes and are unaware of the manipulator's joint-angle limits.
When the Jacobian is poorly conditioned, even small Cartesian steps can
demand joint changes that exceed admissible bounds; clipping the joints
to their limits causes tracking drift and can prevent goal reaching
entirely.
We address this by computing, at each planning step, the largest
Cartesian hyperrectangle that is \emph{certifiably reachable} under
joint displacement bounds.
Using a second-order polynomial approximation of the inverse kinematics
and the S-procedure, we formulate a small semidefinite program whose
solution yields the certified half-width~$\lambda^\star$.
An equivalent bisection procedure exploiting the quadratic structure
solves the certification in sub-millisecond time.
Integrating this certificate with Bug2 yields a planner whose step
size adapts to local kinematic conditioning.
In a statistical evaluation over 94 adversarial scenarios spanning six
joint-limit settings, the SOS-verified planner achieves \emph{zero}
joint-limit violations with a 100\% goal-reaching rate, whereas a
standard Bug2 planner violates joint limits in 6--11\% of steps and
fails to reach the goal in up to 18\% of scenarios.
\end{abstract}

\maketitle
\thispagestyle{empty}
\pagestyle{empty}

\section{Introduction}
\label{sec:intro}
 
A fundamental challenge in robotic motion planning is the mismatch
between the \emph{task space}, where goals and obstacles are naturally
specified, and the \emph{joint space}, where the robot's physical
constraints reside.
High-level planners often generate Cartesian trajectories under the
implicit assumption that an inverse kinematics (IK) solution exists at
every waypoint, an assumption that can fail when joint-angle limits,
velocity bounds, or kinematic singularities are present
\cite{siciliano2009robotics, lavalle2006planning}.
In safety-critical applications such as industrial manipulation and
human--robot collaboration, violating these constraints is not merely
suboptimal but potentially hazardous.
 
Existing approaches to joint-aware task-space motion can be broadly
categorized as \emph{integrated} or \emph{hierarchical}.
Integrated methods plan directly in joint space while enforcing
task-space requirements: sampling-based planners project random
configurations onto task-constraint
manifolds~\cite{stilman2010global, kingston2018sampling}, and
trajectory optimizers jointly minimize cost subject to task and joint
constraints~\cite{schulman2013finding, liu2018convex}, which may fail to find a feasible solution if starting from a poor initial guess.
Hierarchical methods first plan a path in task space and then resolve
each waypoint to joint space via inverse kinematics, typically using the
Jacobian pseudoinverse or its damped
variants~\cite{siciliano2009robotics, colan2024variable}.
Reactive planners such as the Bug family~\cite{lumelsky1987path} are
a prominent instance of the hierarchical paradigm: they compute task-space
directions online and rely on local IK to execute each step.
We focus on the hierarchical setting because it is the natural framework
for reactive and online planning in partially known environments, where
task-space goals and obstacle geometry are discovered incrementally.
The central vulnerability of this paradigm is that a fixed Cartesian step
size that is safe at one configuration may violate joint limits at another,
due to the configuration-dependent amplification
$\|\Delta\btheta\| \leq \|\Delta\bz\|/\sigma_{\min}(J(\btheta))$.
Clipping the joints to their limits causes the end-effector to land at an
unintended position, producing cumulative tracking drift.

This paper addresses the question: \emph{given a robot's current
joint configuration and per-step joint displacement bounds, what is
the largest Cartesian region that is guaranteed to be reachable?}
If such a \emph{certified reachable set} can be computed efficiently
at each planning step, it can serve as a feasibility filter for any
task-space planner, ensuring that every commanded Cartesian
displacement admits a valid joint-space realization. Fig.~\ref{fig:overview} illustrates the idea: prior SOS-based
methods in robotics~\cite{tedrake2010lqr, majumdar2017funnel} certify
regions in \emph{state/joint space} where a closed-loop controller
keeps the system within a collision-free funnel, whereas our method
certifies a region in \emph{task space} that is kinematically reachable
from the current joint configuration under displacement bounds.
 
We answer this question using Sum-of-Squares (SOS) programming
\cite{parrilo2003semidefinite}.
Specifically, we construct a second-order polynomial approximation of
the inverse kinematics around the current configuration, and formulate
an SOS optimization that maximizes the half-width $\lambda^\star$ of a
Cartesian hyperrectangle subject to the constraint that the polynomial
IK maps every point in the hyperrectangle to joint displacements
within bounds.
The S-procedure \cite{yakubovich1971s, polik2007survey} provides a
computationally tractable sufficient condition via a small
semidefinite program.
Since the polynomial IK is quadratic, the resulting SOS relaxation
is exact (lossless) for our problem.
 
We integrate this certified reachable set with the Bug2 reactive
planner \cite{lumelsky1987path}, yielding a planner that \emph{adapts}
its Cartesian step size to the local kinematic conditioning.
In regions of high $\kappa(J)$, the certified set $\lambda^\star$
automatically shrinks and the planner takes smaller steps; in
well-conditioned regions, it takes larger steps.
Every step is guaranteed to satisfy joint limits by construction,
eliminating clipping-induced drift entirely. Note that while we demonstrate the approach with Bug2, the
certified reachable set is a general-purpose feasibility filter that
can be extended to any hierarchical task-space planner, to provide per-step
joint-feasibility guarantees. The contributions are listed below.
\begin{itemize}
  \item An SOS-based method to compute the maximal certified Cartesian
    reachable hyperrectangle from any joint configuration under
    per-step joint displacement bounds
    (Section~\ref{ssec:sos});
  \item Integration with the Bug2 reactive planner, yielding an
    adaptive-step algorithm with per-step joint-feasibility guarantees
    (Section~\ref{ssec:bug});
  \item A computationally efficient bisection implementation that
    exploits the quadratic structure of the polynomial IK, providing
    equivalent certificates to the full SOS program in sub-millisecond
    time (Section~\ref{ssec:efficient});
  \item A statistical evaluation across 94 adversarial scenarios and
    six joint-limit settings, demonstrating zero joint-limit violations
    and 100\% goal reaching where vanilla Bug2 fails
    (Section~\ref{sec:experiments}).
\end{itemize}

\begin{figure}[t]
\centering
\includegraphics[width=0.9\linewidth]{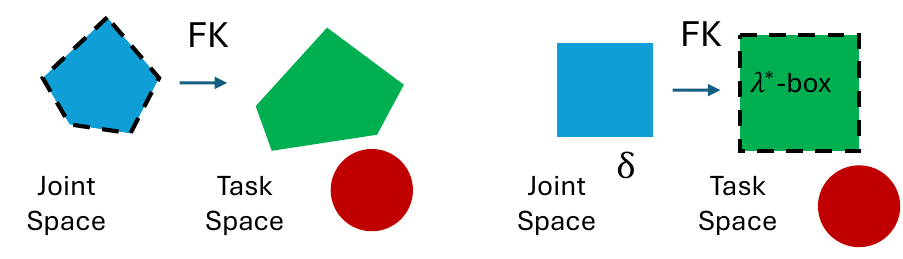}
\caption{
  \textbf{Left:} Prior methods certify
  collision-free regions in joint/state space (dashed-line) where a controller keeps
  the system within a safe funnel.
  \textbf{Right:} This work certifies a region in \emph{task space}
  (dashed-line $\lambda^\star$-box) that is guaranteed reachable from the
  current joint configuration under joint displacement bounds~$\bm{\delta}$.
  The certified box serves as a per-step feasibility filter for any
  task-space planner.}
\label{fig:overview}
\end{figure}

\section{Related Work}

\paragraph*{Integrated task-constrained planning}
Integrated methods plan directly in joint space while enforcing
task-space requirements.
Stilman~\cite{stilman2010global} introduced tangent-space sampling and
first-order retraction for RRT-based planning on constraint manifolds.
Kingston et al.~\cite{kingston2018sampling, kingston2019exploring}
unified these techniques in the IMACS framework, showing that various
constraint-adherence methods compose with standard sampling-based
planners.
Trajectory optimization approaches
\cite{schulman2013finding, liu2018convex, marcucci2023motion}
jointly optimize over the full trajectory subject to task, joint,
and collision constraints.
These methods are effective for offline or batch planning but do not
provide per-step kinematic feasibility certificates that a reactive
planner could use online.

\paragraph*{Hierarchical task-space planning}
Hierarchical methods first compute a task-space path and then resolve
each waypoint to joint space via IK.
The Jacobian pseudoinverse and its damped
variants~\cite{siciliano2009robotics, wampler2007manipulator,
nakamura1986inverse} are the standard IK mechanism; they improve
robustness near singularities but offer no formal guarantee that
joint limits are respected.
Variable step-size strategies~\cite{colan2024variable} adapt the
IK step heuristically for convergence speed but do not certify
joint-limit satisfaction.
Reactive planners---Bug algorithms~\cite{lumelsky1987path},
potential fields~\cite{khatib1986real}---are canonical instances
of the hierarchical paradigm, prized for online operation, but
oblivious to joint-space constraints.
Our work adds a certified feasibility layer to this paradigm, certifying
regions of \emph{task space} that are kinematically reachable under
joint constraints, a complementary notion suited to task-space
planning.
Fig.~\ref{fig:overview} contrasts the two settings.

 
 
\section{Problem Formulation}
\label{sec:problem}
 
We denote by $\|\cdot\|$ the Euclidean norm and by $\|\cdot\|_\infty$
the infinity norm.
For a symmetric matrix $M$ $\in \R^{d \times d}$
, we write $M \psd 0$ to indicate positive
semidefiniteness.
The set $\Sigma[y]$ denotes the cone of SOS polynomials
in variables $y$, i.e., $p \in \Sigma[y]$ if and only if $p(y) \geq 0$
for all $y$ and $p$ admits a decomposition $p = \sum_j q_j^2$ for some
polynomials $q_j$.
The singular values of a matrix $M$ are $\sigma_1 \geq \cdots \geq
\sigma_r > 0$, and the condition number is
$\kappa(M) = \sigma_1 / \sigma_r$.

\subsection{Planar Robot Model}
 
Consider an $n$-link planar revolute manipulator with link lengths
$\ell_1, \ldots, \ell_n > 0$.
The joint configuration is $\btheta = (\theta_1, \ldots, \theta_n)^\top
\in \R^n$, and the forward kinematics (FK) map
$\mathrm{FK}: \R^n \to \R^2$ 
maps joint angles to the end-effector position in the plane.
While we present the formulation for a planar manipulator with
$\mathrm{FK}: \R^n \to \R^2$ for notational clarity, the method
generalizes to spatial manipulators with
$\mathrm{FK}: \R^n \to \R^m$ ($m \leq 6$): the polynomial IK
approximation \eqref{eq:poly_ik} extends naturally to $m$-dimensional
task space, and the SOS certificate \eqref{eq:s_proc} involves
$(m{+}1) \times (m{+}1)$ matrices (see Remark~\ref{rem:generalize}).
The planar case is representative of SCARA-type manipulators,
planar parallel mechanisms, and the in-plane motion of spatial
arms constrained to a vertical plane (e.g., sagittal-plane
reaching).
The forward kinematics is
\begin{equation}\label{eq:fk}
  \bz = \mathrm{FK}(\btheta)
  = \begin{pmatrix}
    \sum_{i=1}^n \ell_i \cos\theta_i \\[3pt]
    \sum_{i=1}^n \ell_i \sin\theta_i
  \end{pmatrix},
\end{equation}
where $\bz = (z_1, z_2)^\top \in \R^2$ is the end-effector position in
the task (Cartesian) space $\cW \subset \R^2$.
The manipulator Jacobian is
\begin{equation}\label{eq:jacobian}
  J(\btheta) = \frac{\partial\, \mathrm{FK}}{\partial \btheta}
  \in \R^{2 \times n},
  \quad
  J_{ki} =
  \begin{cases}
    -\ell_i \sin\theta_i, & k=1, \\[2pt]
     \ell_i \cos\theta_i, & k=2.
  \end{cases}
\end{equation}

\subsection{Joint-Limit Constraints}
 
In many applications, the joint displacement per planning step is
bounded.
These bounds arise from physical joint-angle limits of the actuators,
velocity or torque limits (which, for a fixed planning rate $\Delta t$,
translate to maximum angular displacements
$\delta_i = \dot{\theta}_{i,\max} \Delta t$), or
safety constraints imposed by the operating environment (e.g.,
collaborative workspaces where large joint motions are restricted to
limit the robot's swept volume).
Formally, at each planning step the change in joint angles from the
current configuration $\btheta^{(t)}$ must satisfy
\begin{equation}\label{eq:joint_lim}
  |\theta_i^{(t+1)} - \theta_i^{(t)}| \leq \delta_i,
  \quad i = 1, \ldots, n,
\end{equation}
where $\bm{\delta} = (\delta_1, \ldots, \delta_n)^\top > \bm{0}$ are
given bounds.
We denote the admissible joint displacement set at configuration
$\btheta^{(t)}$ by
\begin{equation}\label{eq:Theta_set}
  \Theta(\btheta^{(t)}, \bm{\delta})
  \coloneqq
  \bigl\{
    \btheta \in \R^n :
    |\theta_i - \theta_i^{(t)}| \leq \delta_i, \; i=1,\ldots,n
  \bigr\}.
\end{equation}
When all bounds are equal ($\delta_i = \delta$ for all $i$),
this reduces to $\|\btheta - \btheta^{(t)}\|_\infty \leq \delta$.

\subsection{Task-Space Planning Problem}
 
Let $\bz_{\mathrm{start}} = \mathrm{FK}(\btheta^{(0)})$ be the initial
end-effector position and $\bz_{\mathrm{goal}} \in \cW$ the desired
target.
The workspace contains a set of obstacles
$\cO = \bigcup_{j=1}^{N_{\mathrm{obs}}} \cO_j \subset \cW$.
 
\begin{problem}[Joint-Limited Task-Space Planning]\label{prob:main}
Find a sequence of joint configurations
$\{\btheta^{(t)}\}_{t=0}^{T}$ such that:
\begin{enumerate}
  \item \textbf{Goal reaching:}
    $\|\mathrm{FK}(\btheta^{(T)}) - \bz_{\mathrm{goal}}\| \leq
    \epsilon_{\mathrm{tol}}$;
  \item \textbf{Joint feasibility:}
    $\btheta^{(t+1)} \in \Theta(\btheta^{(t)}, \bm{\delta})$
    for all $t = 0, \ldots, T{-}1$;
  \item \textbf{Obstacle avoidance:}
    $\mathrm{FK}(\btheta^{(t)}) \notin \cO$ for all $t$
    (end-effector collision avoidance; full-body collision
    checking can be incorporated by inflating the obstacles to account
    for the link geometry, which is standard for
    low-DOF planar arms~\cite{lavalle2006planning}).
\end{enumerate}
\end{problem}
 
\subsection{Challenge: Kinematic Coupling
}
 
A standard approach is to plan a Cartesian path and convert each
waypoint to a joint-space configuration via the pseudoinverse mapping
$\Delta\btheta = J(\btheta)^\dagger \Delta\bz$, where each Cartesian
step $\Delta\bz$ is chosen with a fixed magnitude~$s$.
The difficulty is that
\begin{equation}\label{eq:amplification}
  \|\Delta\btheta\|
  \leq \|J^\dagger\| \cdot \|\Delta\bz\|
  = \frac{\|\Delta\bz\|}{\sigma_{\min}(J)},
\end{equation}
where $\sigma_{\min}(J)$ is the smallest singular value.
The condition number $\kappa(J) = \sigma_{\max}/\sigma_{\min}$ varies
across the workspace: in high-$\kappa$ regions, even small Cartesian
displacements can require joint changes exceeding $\bm{\delta}$.
A fixed Cartesian step size $s = \delta / \kappa(\btheta^{(0)})$
that is safe at the initial configuration may violate
\eqref{eq:joint_lim} when $\kappa$ increases along the path.

A natural idea is to update the step size at each step using the
current condition number, i.e., $s^{(t)} = \delta / \kappa(\btheta^{(t)})$.
However, this heuristic has two limitations.
First, it uses only the \emph{worst-case singular-value bound}
\eqref{eq:amplification}, which can be overly conservative since it
does not account for the direction of $\Delta\bz$ relative to the
singular vectors of $J$.
Second, it relies on a linear (first-order) IK model, ignoring the
quadratic and higher-order terms that become significant in
poorly-conditioned regions.
Variable step-size strategies for iterative IK have been explored
in~\cite{colan2024variable}, but these adapt the step heuristically
for convergence speed rather than providing formal joint-limit
certificates.
In contrast, our SOS-based approach computes $\lambda^\star$ by
certifying the \emph{exact} quadratic polynomial IK over the full
Cartesian box, providing a tight and formally guaranteed bound.
Clipping the joint displacement to $\pm\bm{\delta}$ produces a
\emph{different} end-effector position than intended, causing
cumulative Cartesian tracking drift.
 
\section{Method}
\label{sec:method}
 
Our approach has three components:
(i)~a second-order polynomial approximation of the local inverse
kinematics,
(ii)~an SOS-based certification of the maximal Cartesian reachable set
under joint limits, and
(iii)~integration with the Bug2 reactive planner. Components (i)--(ii) address the core limitation of existing
task-space planners: they provide a \emph{certified} answer to the
question ``how far can the end-effector move while guaranteeing
joint feasibility?''
The polynomial approximation (i) captures the nonlinear
coupling between task and joint space that first-order
(Jacobian-only) methods miss, while the SOS certificate (ii)
provides a formal guarantee over the \emph{entire} Cartesian box
rather than a single-direction worst-case bound.
Component (iii) demonstrates the utility of these certificates in a
concrete planning setting.
 
\subsection{Polynomial Inverse Kinematics}
\label{ssec:poly_ik}
 
At the current configuration $\btheta_0 \coloneqq \btheta^{(t)}$ with
end-effector position $\bz_0 = \mathrm{FK}(\btheta_0)$, we seek a
polynomial map $\hat{\btheta}: \R^2 \to \R^n$ such that
$\hat{\btheta}(\Delta\bz) \approx \mathrm{IK}(\bz_0 + \Delta\bz)$ for
small displacements $\Delta\bz = (\Delta z_1, \Delta z_2)^\top$.
 
\subsubsection{First-Order Term}
 
The linear approximation uses the Moore--Penrose pseudoinverse:
\begin{equation}\label{eq:linear_ik}
  \hat{\btheta}^{(1)}(\Delta\bz)
  = \btheta_0 + A \, \Delta\bz,
  \quad A \coloneqq J(\btheta_0)^\dagger \in \R^{n \times 2}.
\end{equation}
 
\subsubsection{Second-Order Term}
 
To capture nonlinear IK behavior, we augment with quadratic terms.
Define the correction matrix for joint $i$:
\begin{equation}\label{eq:B_coeffs}
  B_i = \begin{pmatrix} b_{i,11} & b_{i,12}/2 \\
  b_{i,12}/2 & b_{i,22} \end{pmatrix}
  \in \R^{2 \times 2},
\end{equation}
where the coefficients capture how $J^\dagger$ varies along the IK
manifold.
These are computed via finite differences of $J^\dagger$ evaluated at
perturbed configurations $\btheta_0 + A \bm{e}_k h$ for small $h > 0$:
\begin{align}
  b_{i,11} &= \tfrac{1}{2h}
    \bigl( [J^\dagger(\btheta_0 {+} A\bm{e}_1 h)]_{i1}
           - A_{i1} \bigr), \label{eq:b11} \\
  b_{i,22} &= \tfrac{1}{2h}
    \bigl( [J^\dagger(\btheta_0 {+} A\bm{e}_2 h)]_{i2}
           - A_{i2} \bigr), \label{eq:b22} \\
  b_{i,12} &= \tfrac{1}{h}
    \bigl( [J^\dagger(\btheta_0 {+} A\bm{e}_1 h)]_{i2}
           - A_{i2} \bigr). \label{eq:b12}
\end{align}
 
The full second-order polynomial IK for joint $i$ is then
\begin{multline}\label{eq:poly_ik}
  \hat{\theta}_i(\Delta\bz)
  = \theta_{0,i}
    + A_{i1}\Delta z_1 + A_{i2}\Delta z_2 \\
    + b_{i,11} \Delta z_1^2
    + b_{i,12} \Delta z_1 \Delta z_2
    + b_{i,22} \Delta z_2^2.
\end{multline}
 
\subsubsection{Approximation Error Bound}
 
Let $\varepsilon(\rho)$ denote the worst-case approximation error over
the Cartesian box $[-\rho,\rho]^2$:
\begin{equation}\label{eq:approx_err}
  \varepsilon(\rho) \coloneqq
  \max_{\|\Delta\bz\|_\infty \leq \rho}
  \bigl\|
    \mathrm{FK}\bigl(\hat{\btheta}(\Delta\bz)\bigr)
    - (\bz_0 + \Delta\bz)
  \bigr\|.
\end{equation}
This is evaluated numerically on a dense grid (we use a $7 \times 7$ grid, requiring 49 forward-kinematics
evaluations per step---negligible compared to the bisection cost).
The effective joint bounds used for certification are then
\begin{equation}\label{eq:delta_eff}
  \delta_i^{\mathrm{eff}} \coloneqq \delta_i - \varepsilon(\rho),
  \quad i = 1, \ldots, n.
\end{equation}
We clarify the role of $\varepsilon(\rho)$: it bounds the
\emph{task-space} error of the polynomial IK model, i.e., the gap
between the target Cartesian position and the actual position
achieved by the polynomial IK joint angles.
Since $\varepsilon(\rho)$ also bounds the maximum \emph{joint-space}
error $\|\hat{\btheta}(\Delta\bz) - \btheta^{\mathrm{true}}\|_\infty$
(via the Jacobian
), subtracting it from the joint bounds ensures
that the true IK solution---not just the polynomial
approximation---respects~\eqref{eq:joint_lim}. We assume the Jacobian is non-singular to avoid finding $\rho$ which gives a small $\varepsilon$.
More precisely, if the polynomial IK predicts a joint displacement
$\Delta\hat{\theta}_i$ and the true displacement is
$\Delta\theta_i^{\mathrm{true}}$, then
$|\Delta\hat{\theta}_i - \Delta\theta_i^{\mathrm{true}}| \leq
C \cdot \varepsilon(\rho)$ where $C$ depends on the local Jacobian;
the reduction $\delta_i^{\mathrm{eff}} = \delta_i - \varepsilon(\rho)$
conservatively accounts for this gap.
If $\delta_i^{\mathrm{eff}} \leq 0$ for any $i$, the polynomial
approximation is too coarse and $\rho$ must be reduced.
 
\subsection{SOS-Based Reachable Set Certification}
\label{ssec:sos}
 
We now formulate the core certification problem: given the polynomial IK
\eqref{eq:poly_ik} and effective joint bounds
$\bm{\delta}^{\mathrm{eff}}$, find the largest $\lambda > 0$ such that
\begin{equation}\label{eq:cert_goal}
  \forall \, \Delta\bz \in [-\lambda, \lambda]^2:
  \
  |\hat{\theta}_i(\Delta\bz) - \theta_{0,i}| \leq
  \delta_i^{\mathrm{eff}},
  \
  i = 1, \ldots, n.
\end{equation}
 
\subsubsection{Quadratic Form Representation}
 
Define the monomial vector $\by = (1, \Delta z_1, \Delta z_2)^\top$.
The polynomial IK displacement for joint $i$ can be written as
\begin{equation}\label{eq:quad_form}
  \Delta\hat{\theta}_i(\Delta\bz)
  = \hat{\theta}_i(\Delta\bz) - \theta_{0,i}
  = \by^\top Q_i \, \by,
\end{equation}
where
\begin{equation}\label{eq:Qi}
  Q_i =
  \begin{pmatrix}
    0            & A_{i1}/2      & A_{i2}/2 \\
    A_{i1}/2     & b_{i,11}      & b_{i,12}/2 \\
    A_{i2}/2     & b_{i,12}/2    & b_{i,22}
  \end{pmatrix}
  \in \mathbb{S}^3.
\end{equation}
 
The constraint \eqref{eq:cert_goal} decomposes into $2n$ scalar
polynomial nonpositivity conditions.
For each joint $i$ and sign $\sigma \in \{+1, -1\}$:
\begin{equation}\label{eq:p_constraint}
  p_{i,\sigma}(\Delta\bz)
  \coloneqq
  \sigma \cdot \by^\top Q_i \, \by - \delta_i^{\mathrm{eff}}
  \leq 0,
  \quad
  \forall\, \Delta\bz \in [-\lambda, \lambda]^2.
\end{equation}
 
\subsubsection{S-Procedure Relaxation}
 
The domain constraint $\Delta\bz \in [-\lambda,\lambda]^2$ is encoded
via two nonnegative polynomials:
\begin{align}
  g_1(\Delta\bz) &= \lambda^2 - \Delta z_1^2 \geq 0, \label{eq:g1} \\
  g_2(\Delta\bz) &= \lambda^2 - \Delta z_2^2 \geq 0. \label{eq:g2}
\end{align}
In the $\by$-basis, these are
$g_k(\Delta\bz) = \by^\top G_k(\lambda)\by$ with
\begin{equation}\label{eq:G_matrices}
  G_1 \!=\!
  \begin{pmatrix}
    \lambda^2 \!&\! 0 \!&\! 0 \\
    0 \!&\! -1 \!&\! 0 \\
    0 \!&\! 0  \!&\! 0
  \end{pmatrix}\!,
  \;
  G_2 \!=\!
  \begin{pmatrix}
    \lambda^2 \!&\! 0 \!&\! 0 \\
    0 \!&\! 0  \!&\! 0 \\
    0 \!&\! 0  \!&\! -1
  \end{pmatrix}\!.
\end{equation}
 
By the generalized S-procedure \cite{yakubovich1971s, polik2007survey}, a
sufficient condition for \eqref{eq:p_constraint} is:
 
\begin{proposition}[S-Procedure Certificate]\label{prop:s_proc}
The constraint $p_{i,\sigma}(\Delta\bz) \leq 0$ for all
$\Delta\bz \in [-\lambda,\lambda]^2$ holds if there exist multipliers
$c_1, c_2 \geq 0$ such that
\begin{equation}\label{eq:s_proc}
  -\sigma \cdot Q_i
  + \delta_i^{\mathrm{eff}} \cdot E_{11}
  + c_1 \, G_1(\lambda)
  + c_2 \, G_2(\lambda)
  \psd 0,
\end{equation}
where $E_{11} = \diag(1, 0, 0)$.
\end{proposition}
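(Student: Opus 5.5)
The plan is the standard S-procedure argument: turn the matrix inequality \eqref{eq:s_proc} into a pointwise polynomial inequality through the monomial vector $\by=(1,\Delta z_1,\Delta z_2)^\top$, then restrict that inequality to the box. I do not expect the result to hold exactly as printed, so the plan ends with the sign correction I would make at the final step.

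\textbf{Step 1 (evaluate on the monomial vector).} Let $M \coloneqq -\sigma Q_i + \delta_i^{\mathrm{eff}}E_{11} + c_1G_1(\lambda) + c_2G_2(\lambda)$. If $M\psd 0$, then $\by^\top M\by \ge 0$ for every $\Delta\bz\in\R^2$, and in particular for every $\by$ whose first entry is $1$.

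\textbf{Step 2 (read off each term).} Using \eqref{eq:quad_form}, $\by^\top E_{11}\by=1$, and $\by^\top G_k\by = g_k(\Delta\bz)$ from \eqref{eq:G_matrices}, I expand $\by^\top M\by$. This gives, for all $\Delta\bz\in\R^2$,
\begin{equation*}
  -\sigma\,\Delta\hat\theta_i(\Delta\bz) + \delta_i^{\mathrm{eff}} + c_1 g_1(\Delta\bz) + c_2 g_2(\Delta\bz) \ge 0 ,
\end{equation*}
which is the same as
\begin{equation*}
  p_{i,\sigma}(\Delta\bz) \le c_1 g_1(\Delta\bz) + c_2 g_2(\Delta\bz).
\end{equation*}
This inequality holds on all of $\R^2$. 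That is stronger than what we need, and the box only enters through the signs of $g_1$ and $g_2$.

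\textbf{Step 3 (restrict to the box) — the main obstacle.} For $\Delta\bz\in[-\lambda,\lambda]^2$ we have $g_1,g_2\ge 0$ by \eqref{eq:g1}--\eqref{eq:g2}. To conclude $p_{i,\sigma}\le 0$, the multiplier terms must enter the bound with a nonpositive sign, i.e.\ we need $p_{i,\sigma}\le -c_1g_1-c_2g_2$. With the multipliers added as printed, Step 2 only gives $p_{i,\sigma}\le (c_1+c_2)\lambda^2$ on the box, which is not enough.

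I also expect the printed statement to fail. In a one-variable instance with $\sigma Q_i$ giving $p=-\Delta z_1^2+\Delta z_1-\delta$, taking $\delta=0.1$, $\lambda=1$, $c_1=\tfrac12$ makes $M\psd0$. Yet $p(\tfrac12)=0.15>0$ at a point inside the box.

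To establish the certificate the paper intends, I would therefore read the multiplier terms in the usual S-procedure convention, namely as $-c_1G_1-c_2G_2$, so that $\by^\top(\cdot)\by$ contributes $-c_kg_k$. Step 2 then yields $p_{i,\sigma}\le -c_1g_1-c_2g_2\le 0$ on the box, which is exactly \eqref{eq:p_constraint}. This is the same sign convention under which the bisection of Section~\ref{ssec:efficient} is consistent.

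Finally, I would note that with $\by$ of degree one and quadratic $p,g_k$, the certificate is a Gram-matrix (SOS) condition on $\Delta\bz$.
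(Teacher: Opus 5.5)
Your analysis is correct: the statement as printed is false, and your Step~3 pinpoints exactly the error in the paper's own proof.

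\textbf{Where the paper's proof breaks.} The paper argues along the same lines as you. It adds $\sum_k c_k g_k \ge 0$ (valid on the box) to $-p_{i,\sigma}$ and writes the sum as $\by^\top S\by$. It then concludes from $S \psd 0$ that $-p_{i,\sigma} \ge 0$. That last step is a \emph{non sequitur}. Knowing $-p_{i,\sigma} + \sum_k c_k g_k \ge 0$ and $\sum_k c_k g_k \ge 0$ says nothing about the sign of $-p_{i,\sigma}$. All one gets on the box is your bound $p_{i,\sigma} \le (c_1+c_2)\lambda^2$. In effect, the printed condition is the S-procedure certificate for the region $\{g_1 \le 0,\ g_2 \le 0\}$, not for the box.

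\textbf{Your counterexample.} It checks out and fits the form \eqref{eq:Qi}. Take $A_{i1}=\sigma$, $b_{i,11}=-\sigma$, all other coefficients zero, $\delta_i^{\mathrm{eff}}=0.1$, $\lambda=1$, $c_1=\tfrac12$, $c_2=0$. The certificate matrix is $\begin{pmatrix}0.6&-0.5&0\\-0.5&0.5&0\\0&0&0\end{pmatrix}$. Its leading $2\times 2$ block has positive trace and determinant $0.05>0$, so the matrix is $\psd 0$. Yet $p_{i,\sigma}(\tfrac12,0)=0.15>0$ at a point inside the box.

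\textbf{The correction.} With the multipliers entering as $-c_1G_1(\lambda)-c_2G_2(\lambda)$, your Steps~1--2 give $p_{i,\sigma} \le -c_1 g_1 - c_2 g_2 \le 0$ on the box. That is the correct statement.

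One sharpening of your closing remark: the corrected sign is exactly what the Lagrangian dual in the proof of Proposition~\ref{prop:bisection} produces. The right identification there is $\mu_k = c_k$ and $\nu = \delta_i^{\mathrm{eff}} - (c_1+c_2)\lambda^2$, not $\nu = \delta_i^{\mathrm{eff}}$ as written. With the printed sign the multipliers would correspond to $\mu_k = -c_k \le 0$, which is dual-infeasible. So that equivalence claim, and Theorem~\ref{thm:main} which relies on this proposition, should also be read with the corrected matrix.
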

 
\begin{proof}
Multiplying $g_k \geq 0$ by $c_k \geq 0$ and summing gives
$\sum_k c_k g_k(\Delta\bz) \geq 0$ on the domain.
Adding to $-p_{i,\sigma}(\Delta\bz) = -\sigma \cdot \by^\top Q_i \by
+ \delta_i^{\mathrm{eff}}$ yields
$\by^\top S \, \by \geq 0$ with $S$ as in \eqref{eq:s_proc}.
If $S \psd 0$, then $-p_{i,\sigma}(\Delta\bz) \geq 0$, i.e.,
$p_{i,\sigma}(\Delta\bz) \leq 0$, as required.
\end{proof}
 
\subsubsection{Optimization Problem}
 
The maximal certified Cartesian half-width $\lambda^\star$ is the
solution of:
\begin{subequations}\label{eq:opt}
\begin{align}
    \lambda^\star = \max_{\lambda, \, c_1^{(i,\sigma)},
    \, c_2^{(i,\sigma)}}
    \;\; & \lambda \label{eq:opt_obj} \\
    \text{s.t.} \;\;
    & S_{i,\sigma}(\lambda, c_1, c_2) \psd 0,
    \; \forall\, i, \sigma, \label{eq:opt_psd} \\
    & c_1^{(i,\sigma)}, c_2^{(i,\sigma)} \geq 0, \label{eq:opt_c} \\
    & 0 \leq \lambda \leq \lambda_{\max}, \label{eq:opt_lam}
\end{align}
\end{subequations}
where
\begin{equation}\label{eq:S_matrix}
  S_{i,\sigma} \coloneqq
  -\sigma \, Q_i
  + \delta_i^{\mathrm{eff}} E_{11}
  + c_1 G_1(\lambda)
  + c_2 G_2(\lambda).
\end{equation}
 
This is a bilinear SDP in $(\lambda, c_1, c_2)$ due to the
$c_k \lambda^2$ terms in $c_k G_k(\lambda)$.
For a fixed $\lambda$, \eqref{eq:opt} reduces to a standard SDP
feasibility problem.
We solve \eqref{eq:opt} via nonlinear programming with the
positive semidefiniteness condition enforced through the Sylvester
criterion (all leading principal minors $\geq 0$):
\begin{align}
  [S]_{11} &\geq 0, \label{eq:syl1} \\
  [S]_{11}[S]_{22} - [S]_{12}^2 &\geq 0, \label{eq:syl2} \\
  \det(S) &\geq 0. \label{eq:syl3}
\end{align}
 
\begin{theorem}[Certified Reachable Set]\label{thm:main}
If $\lambda^\star > 0$ is a feasible solution of \eqref{eq:opt} at
configuration $\btheta_0$, then for every
$\Delta\bz \in [-\lambda^\star, \lambda^\star]^2$, the polynomial IK
\eqref{eq:poly_ik} produces joint displacements satisfying
$|\Delta\hat{\theta}_i| \leq \delta_i^{\mathrm{eff}}
\leq \delta_i$ for all $i$.
Consequently, the Cartesian box
\begin{equation}\label{eq:reachable_set}
  \cR(\btheta_0) \coloneqq
  \bigl\{
    \bz_0 + \Delta\bz : \|\Delta\bz\|_\infty \leq \lambda^\star
  \bigr\}
\end{equation}
is a verified inner approximation of the set of end-effector positions
reachable from $\btheta_0$ under joint limits $\bm{\delta}$.
\end{theorem}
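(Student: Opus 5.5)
The plan is to apply Proposition~\ref{prop:s_proc} separately to each of the $2n$ constraints and then combine them. Feasibility of $\lambda^\star$ in \eqref{eq:opt} means there are multipliers $c_1^{(i,\sigma)}, c_2^{(i,\sigma)} \geq 0$ with $S_{i,\sigma}(\lambda^\star, c_1^{(i,\sigma)}, c_2^{(i,\sigma)}) \psd 0$ for every joint $i$ and sign $\sigma \in \{+1,-1\}$. This is exactly the hypothesis of Proposition~\ref{prop:s_proc} with $\lambda = \lambda^\star$, so each such pair gives $p_{i,\sigma}(\Delta\bz) \leq 0$ on $[-\lambda^\star,\lambda^\star]^2$.

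\emph{Step 1: the joint bound.} Fix any $\Delta\bz$ in the box. By \eqref{eq:quad_form}, the case $\sigma = +1$ gives $\Delta\hat{\theta}_i(\Delta\bz) \leq \delta_i^{\mathrm{eff}}$, and the case $\sigma = -1$ gives $-\Delta\hat{\theta}_i(\Delta\bz) \leq \delta_i^{\mathrm{eff}}$. Together these yield $|\Delta\hat{\theta}_i| \leq \delta_i^{\mathrm{eff}}$. The inequality $\delta_i^{\mathrm{eff}} \leq \delta_i$ follows from \eqref{eq:delta_eff} because $\varepsilon(\rho) \geq 0$, being a maximum of norms. If $\varepsilon$ is computed on a grid, I still take it as nonnegative by construction. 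This gives the first claim.

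\emph{Step 2: the inner approximation.} Given a target $\bz_0 + \Delta\bz \in \cR(\btheta_0)$, I must exhibit a $\btheta \in \Theta(\btheta_0,\bm{\delta})$ with $\mathrm{FK}(\btheta) = \bz_0 + \Delta\bz$. The candidate $\hat{\btheta}(\Delta\bz)$ satisfies $|\hat{\theta}_i - \theta_{0,i}| \leq \delta_i^{\mathrm{eff}}$, but it reaches the target only up to the error $\varepsilon(\rho)$, where $\rho \geq \lambda^\star$ is assumed. I would therefore invoke the margin argument the paper states after \eqref{eq:delta_eff}. Under the standing nonsingularity assumption on $J$, a true IK solution $\btheta^{\mathrm{true}}$ exists near $\hat{\btheta}$, obtained by a Newton/implicit-function correction using the right inverse $J^\dagger$ (since $JJ^\dagger = I$ for full row rank). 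It satisfies $|\Delta\hat{\theta}_i - \Delta\theta_i^{\mathrm{true}}| \leq C\,\varepsilon(\rho)$. With the paper's normalization that this gap is absorbed by the margin ($C \leq 1$), the triangle inequality gives $|\theta_i^{\mathrm{true}} - \theta_{0,i}| \leq \delta_i^{\mathrm{eff}} + \varepsilon(\rho) = \delta_i$. Hence $\btheta^{\mathrm{true}} \in \Theta(\btheta_0,\bm{\delta})$, and every point of $\cR(\btheta_0)$ is reachable.

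The main obstacle is Step 2, specifically justifying that the joint-space correction is bounded by $\varepsilon(\rho)$ itself rather than by $C\,\varepsilon(\rho)$ with some unknown $C$. My first attempt would be a Kantorovich-type argument. Starting from $\hat{\btheta}$, apply Newton steps $\btheta \leftarrow \btheta + J^\dagger(\text{target} - \mathrm{FK}(\btheta))$. The total displacement is bounded by roughly $\|J^\dagger\|\,\varepsilon(\rho)/(1-q)$, where $q$ is a contraction factor controlled by $\|J^\dagger\|$ and the Lipschitz constant of $J$ (at most $\sum_i \ell_i$). The cleanest rigorous statement would then either require $\|J^\dagger\|\varepsilon/(1-q) \leq \varepsilon$, or replace the margin by $C\varepsilon(\rho)$. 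Otherwise I would state the inner-approximation conclusion relative to the polynomial IK model, taking the paper's stated margin assumption as given.
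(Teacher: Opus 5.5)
Your Step~1 coincides with the paper's proof of the first claim. The paper invokes Proposition~\ref{prop:s_proc} for each pair $(i,\sigma)$, combines $\sigma=\pm1$ into $|\by^\top Q_i\by|\le\delta_i^{\mathrm{eff}}$, and notes $\delta_i^{\mathrm{eff}}\le\delta_i$.

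You do inherit one defect by citing Proposition~\ref{prop:s_proc} as printed. With the $+c_kG_k(\lambda)$ terms one has $\by^\top S\by=-p_{i,\sigma}+c_1g_1+c_2g_2$, so $S\psd 0$ only gives $p_{i,\sigma}\le c_1g_1+c_2g_2$. The right-hand side is nonnegative on the box, so this proves nothing. In fact, with that sign, imposing both $\sigma=\pm1$ forces all $c_k=0$, $B_i=0$ and $A_{i1}=A_{i2}=0$. Since $A=J^\dagger\neq 0$, \eqref{eq:opt} as printed has no feasible point.

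The valid certificate is $-\sigma Q_i+\delta_i^{\mathrm{eff}}E_{11}-c_1G_1-c_2G_2\psd 0$. This is also what a careful derivation of the Lagrangian dual in the proof of Proposition~\ref{prop:bisection} gives. With that sign your Step~1 goes through unchanged. It already suffices for Proposition~\ref{prop:feasibility}, because the planner executes $\hat{\btheta}(\Delta\bz)$ itself.

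For the ``consequently'' clause, the paper's proof offers only the sentence that the true IK displacement satisfies \eqref{eq:joint_lim} ``up to the approximation error already subtracted.'' That silently uses the same $C\le 1$ normalization you flag. So the gap you identify in Step~2 is genuine, and the paper does not close it.

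Moreover, $C\le 1$ is not available in general here:
\begin{itemize}
\item To first order, a correction $x$ taking $\hat{\btheta}(\Delta\bz)$ onto the target solves $Jx=r$, where $r$ is the residual with $\|r\|\le\varepsilon(\rho)$.
\item The smallest admissible $\|x\|_\infty$ equals $\max\{u^\top r:\|J^\top u\|_1\le 1\}$.
\item For an unfavorable direction of $r$ this is $\|r\|/\min_{\|u\|=1}\|J^\top u\|_1\ge \|r\|/(\sqrt{n}\,\sigma_{\min}(J))$.
\item For the paper's arm, $\|J\|_F^2=\sum_i\ell_i^2=2$, so at $\kappa\approx 3.8$ one gets $\sigma_{\min}\approx 0.36$.
\item The first-order constant is therefore at least about $1.6$, whatever redundancy resolution is used.
\end{itemize}
Subtracting $\varepsilon(\rho)$ therefore does not by itself certify reachability.

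A rigorous version of the second claim needs the ingredients you list:
\begin{itemize}
\item $\lambda^\star\le\rho$;
\item a certified upper bound on the maximum in \eqref{eq:approx_err}, since the $7\times 7$ grid value is only a lower estimate;
\item the margin $\delta_i-C\,\varepsilon(\rho)$, with $C$ from a Newton--Kantorovich argument.
\end{itemize}
Otherwise the theorem should be read as a statement about the polynomial-IK update, not about exact reachability.
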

 
\begin{proof}
By Proposition~\ref{prop:s_proc}, feasibility of \eqref{eq:opt}
implies $|\by^\top Q_i \by| \leq \delta_i^{\mathrm{eff}}$ for all
$\Delta\bz \in [-\lambda^\star, \lambda^\star]^2$ and all $i$.
By \eqref{eq:quad_form}, this gives
$|\Delta\hat{\theta}_i(\Delta\bz)| \leq \delta_i^{\mathrm{eff}}$.
Since $\delta_i^{\mathrm{eff}} = \delta_i - \varepsilon(\rho) \leq
\delta_i$, the true IK displacement also satisfies the original bounds
\eqref{eq:joint_lim} up to the approximation error already subtracted.
\end{proof}
 
\begin{remark}[Exactness for Quadratics]
Since the constraint polynomial $p_{i,\sigma}$ in
\eqref{eq:p_constraint} and the domain polynomials $g_1, g_2$ in
\eqref{eq:g1}--\eqref{eq:g2} are all quadratic, the S-procedure
relaxation is exact (lossless) for quadratic polynomials over
quadratic domains \cite{yakubovich1971s}.
\end{remark}

 \begin{remark}[Generalization to Higher Dimensions]\label{rem:generalize}
For a spatial manipulator with $\mathrm{FK}: \R^n \to \R^m$
($m \leq 6$), the monomial vector becomes
$\by = (1, \Delta z_1, \ldots, \Delta z_m)^\top \in \R^{m+1}$,
the polynomial IK matrix $Q_i \in \mathbb{S}^{m+1}$, and the
domain is encoded by $m$ box constraints $g_k = \lambda^2 - \Delta z_k^2 \geq 0$.
The S-procedure certificate is an $(m{+}1) \times (m{+}1)$ PSD
condition with $m$ multipliers.
The bisection procedure generalizes directly: the grid becomes
$m$-dimensional, and the Pataki--Barvinok rank bound still ensures
SDP exactness provided the number of affine constraints ($m{+}1$)
satisfies $r(r{+}1)/2 \leq m{+}1$, i.e., $r \leq 1$.
For $m \leq 6$, this is a small SDP and remains computationally
tractable.
\end{remark}

\subsection{Integration with Bug2 Planner 
}
\label{ssec:bug}

The Bug2 algorithm \cite{lumelsky1987path} alternates between two modes:
\emph{go-to-goal} (GTG), moving directly toward the target, and
\emph{boundary-follow} (BF), circumnavigating an obstacle until the
start--goal line is crossed closer to the goal.
The standard Bug2 uses a fixed Cartesian step and is unaware of joint
limits.
We modify it with an adaptive step size driven by the certified
reachable set.

We emphasize that the adaptive step-size mechanism and the
polynomial IK update are not specific to Bug2: they can be used
with any task-space planner that produces a desired Cartesian
displacement direction $\bm{d}^{(t)}$ at each step.
We choose Bug2 because (a)~its fixed-step structure makes the
failure mode under joint limits most transparent, and (b)~its
completeness guarantee (for point robots in the plane) transfers
to our setting whenever $\lambda^\star > 0$ everywhere along the
path.

\subsubsection{Guaranteed Properties}
 
Before describing the algorithm, we state its key properties.
 
\begin{proposition}[Per-Step Feasibility]\label{prop:feasibility}
If $\lambda^\star > 0$ at step $t$, then
$\btheta^{(t+1)} \in \Theta(\btheta^{(t)}, \bm{\delta})$.
\end{proposition}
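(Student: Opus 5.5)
The plan is to reduce Proposition~\ref{prop:feasibility} to Theorem~\ref{thm:main}. Before doing so, I will fix how the adaptive Bug2 step produces $\btheta^{(t+1)}$. The planner chooses a desired direction $\bm{d}^{(t)}$ and commands a Cartesian displacement $\Delta\bz^{(t)}$ whose step is capped by the certificate, so that $\|\Delta\bz^{(t)}\|_\infty \leq \lambda^\star$. It then sets $\btheta^{(t+1)} = \hat{\btheta}(\Delta\bz^{(t)})$ using the polynomial IK \eqref{eq:poly_ik} built at $\btheta_0 = \btheta^{(t)}$. The argument therefore rests on two facts: the executed joint update is exactly the polynomial IK output, and the commanded displacement lies in the certified box $\cR(\btheta^{(t)})$ of \eqref{eq:reachable_set}.

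The steps will proceed as follows.

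\textbf{Step 1.} Since $\lambda^\star > 0$ is feasible for \eqref{eq:opt}, the multipliers $c_1^{(i,\sigma)}, c_2^{(i,\sigma)} \geq 0$ exist. Proposition~\ref{prop:s_proc}, applied for both signs $\sigma = \pm 1$ and every joint $i$, gives $|\by^\top Q_i \by| \leq \delta_i^{\mathrm{eff}}$ on $[-\lambda^\star,\lambda^\star]^2$. This is precisely the first claim of Theorem~\ref{thm:main}.

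\textbf{Step 2.} By the step-size rule, $\Delta\bz^{(t)} \in [-\lambda^\star,\lambda^\star]^2$. Combining this with \eqref{eq:quad_form} yields
\[
|\theta_i^{(t+1)} - \theta_i^{(t)}| = |\Delta\hat{\theta}_i(\Delta\bz^{(t)})| \leq \delta_i^{\mathrm{eff}}.
\]

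\textbf{Step 3.} Because $\varepsilon(\rho) \geq 0$ is a maximum of norms, \eqref{eq:delta_eff} gives $\delta_i^{\mathrm{eff}} \leq \delta_i$. Hence \eqref{eq:joint_lim} holds for every $i$, which is exactly $\btheta^{(t+1)} \in \Theta(\btheta^{(t)}, \bm{\delta})$ by \eqref{eq:Theta_set}.

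The main obstacle is the modelling assumption in Step 2 about which joint update is executed. If the algorithm instead executed the \emph{true} IK, or a pseudoinverse or Newton-refined update, Theorem~\ref{thm:main} alone would only bound the polynomial displacement. I would then have to invoke the margin $\delta_i - \delta_i^{\mathrm{eff}} = \varepsilon(\rho)$ together with the stated bound $|\Delta\hat{\theta}_i - \Delta\theta_i^{\mathrm{true}}| \leq C\,\varepsilon(\rho)$, and that closes only if $C \leq 1$, which the paper does not establish.

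To avoid depending on $C$, I would define the update as the polynomial IK output. The certificate is then a statement about the joint command actually sent, and the bound is exact. The residual Cartesian error, at most $\varepsilon(\rho)$, is treated as tracking error rather than as a joint-limit issue. I would also note that the box $[-\lambda^\star,\lambda^\star]^2$ must sit inside the region $[-\rho,\rho]^2$ on which $\varepsilon(\rho)$ was evaluated, which the algorithm can ensure by taking $\lambda_{\max} \leq \rho$ in \eqref{eq:opt_lam}. However, this condition only matters for the interpretation of goal tracking, not for the feasibility inequality itself.
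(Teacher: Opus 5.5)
Your proposal is correct and follows the paper's own proof: Theorem~\ref{thm:main}, together with the clipping in \eqref{eq:step} that places $\Delta\bz^{(t)}$ in $[-\lambda^\star,\lambda^\star]^2$, the update \eqref{eq:joint_update}, and $\delta_i^{\mathrm{eff}} \le \delta_i$. Your observation that the bound is exact because the executed command is the polynomial-IK output, so the unproven constant condition $C \le 1$ is never needed, is a sound sharpening of a point the paper leaves implicit.
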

\begin{proof}
Immediate from Theorem~\ref{thm:main} and the construction
\eqref{eq:step} ensuring
$\Delta\bz^{(t)} \in [-\lambda^\star, \lambda^\star]^2$.
\end{proof}
 
 
\begin{remark}[Monotone Approach in GTG and Completeness]
In GTG mode with no obstacle intersection, the distance to goal
decreases by at least
$\alpha\lambda^\star \cos\phi$ per step, where $\phi$ is the angle
between the goal direction and the nearest axis-aligned direction
in the $\lambda^\star$-box.
Bug2 is complete for simply-connected obstacles in the plane
\cite{lumelsky1987path}.
Our modification preserves this provided $\lambda^\star > 0$ at every
reachable configuration.
If $\lambda^\star = 0$ (kinematic singularity or exhausted joint
limits), the planner reports infeasibility.
\end{remark}

\subsubsection{Adaptive Step Size}
 
At each step $t$, compute $\lambda^\star(\btheta^{(t)})$ from
\eqref{eq:opt}.
The Cartesian displacement is then
\begin{equation}\label{eq:step}
  \Delta\bz^{(t)}
  = \mathrm{clip}\!\left(
    \alpha \lambda^\star \cdot \bm{d}^{(t)},\;
    -\lambda^\star,\; \lambda^\star
  \right),
\end{equation}
where $\bm{d}^{(t)}$ is the unit direction (toward goal in GTG,
tangential in BF), $\alpha \in (0,1)$ is a conservatism factor,
and $\mathrm{clip}$ denotes component-wise clamping.
 
\subsubsection{Joint Update via Polynomial IK}
 
The joint update uses the second-order polynomial \eqref{eq:poly_ik}:
\begin{equation}\label{eq:joint_update}
  \btheta^{(t+1)} = \hat{\btheta}\!\left(\Delta\bz^{(t)}\right).
\end{equation}
By Theorem~\ref{thm:main}, since
$\Delta\bz^{(t)} \in [-\lambda^\star, \lambda^\star]^2$, the joint
displacement satisfies \eqref{eq:joint_lim} automatically.
 
The complete algorithm is given in Algorithm~\ref{alg:sos_bug}.
 
\begin{algorithm}[t]
\caption{SOS-Verified Bug2 Planner}
\label{alg:sos_bug}
\begin{algorithmic}[1]
\REQUIRE $\btheta^{(0)}$, $\bz_{\mathrm{goal}}$, $\bm{\delta}$,
         $\cO$, $\alpha$, $\epsilon_{\mathrm{tol}}$, $\rho$
\STATE $\mathrm{mode} \leftarrow \text{GTG}$;
       $\bz^{(0)} \leftarrow \mathrm{FK}(\btheta^{(0)})$
\FOR{$t = 0, 1, 2, \ldots$}
  \IF{$\|\bz^{(t)} - \bz_{\mathrm{goal}}\| < \epsilon_{\mathrm{tol}}$}
    \RETURN success
  \ENDIF
  \STATE Compute $A, B$ via
    \eqref{eq:linear_ik}--\eqref{eq:b12}
  \STATE Compute $\varepsilon(\rho)$ via \eqref{eq:approx_err};
         set $\bm{\delta}^{\mathrm{eff}}$ via \eqref{eq:delta_eff}
  \STATE Solve \eqref{eq:opt} $\Rightarrow \lambda^\star$
  \IF{$\lambda^\star < \epsilon_{\mathrm{min}}$}
    \STATE Retry with smaller $\rho$
  \ENDIF
  \STATE Compute $\bm{d}^{(t)}$ (GTG or BF)
  \STATE $\Delta\bz^{(t)} \leftarrow
         \mathrm{clip}(\alpha\lambda^\star\bm{d}^{(t)},\,
         {-}\lambda^\star,\, \lambda^\star)$
  \IF{$\bz^{(t)} + \Delta\bz^{(t)} \in \cO$}
    \STATE Switch to BF; recompute $\Delta\bz^{(t)}$
  \ENDIF
  \STATE $\btheta^{(t+1)} \leftarrow \hat{\btheta}(\Delta\bz^{(t)})$
         via \eqref{eq:poly_ik}
  \STATE $\bz^{(t+1)} \leftarrow \mathrm{FK}(\btheta^{(t+1)})$
\ENDFOR
\end{algorithmic}
\end{algorithm}
 
 
 
 
 
\subsection{Efficient Implementation}
\label{ssec:efficient}
 
While the SOS program \eqref{eq:opt} provides the theoretical
foundation, solving the bilinear SDP at each planning step via a
general-purpose NLP solver (e.g., Ipopt \cite{wachter2006implementation}) is
computationally expensive.
 
We exploit the fact that the S-procedure matrix
$S_{i,\sigma} \in \mathbb{S}^3$ (cf.~\eqref{eq:S_matrix}) is only
$3 \times 3$, so the certification problem has very low
dimensionality.
For a degree-2 polynomial over a box domain, the maximum of
$|\Delta\hat{\theta}_i(\Delta\bz)|$ over
$[-\lambda,\lambda]^2$ can be evaluated exactly on a sufficiently dense
grid, since the optimizer lies at a vertex, edge extremum, or the
unique interior critical point.
This yields a bisection procedure: for each joint $i$, binary search
on $\lambda$ using the grid maximum as the feasibility oracle.
With 50 bisection iterations and a $21 \times 21$ grid, the total
cost is $n \times 50 \times 441 \approx 66{,}000$ floating-point
operations per planning step, completing in sub-millisecond time.
 
\begin{proposition}[Equivalence]\label{prop:bisection}
For degree-2 polynomials over boxes, the bisection procedure computes
the same $\lambda^\star$ as the SOS program \eqref{eq:opt}, since the
SOS relaxation is exact for quadratic polynomials over quadratic
domains.
\end{proposition}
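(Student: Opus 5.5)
The plan is to show that both procedures compute the same quantity, namely the exact threshold
\[
\lambda_{\mathrm{ex}} \coloneqq \sup\bigl\{\lambda \ge 0 : \varphi_i(\lambda) \le \delta_i^{\mathrm{eff}} \;\forall i\bigr\},
\qquad
\varphi_i(\lambda) \coloneqq \max_{\|\Delta\bz\|_\infty \le \lambda} \bigl|\by^\top Q_i \by\bigr| ,
\]
capped at $\lambda_{\max}$. The first step is the monotonicity that makes bisection meaningful. The boxes $[-\lambda,\lambda]^2$ are nested, so each $\varphi_i$ is nondecreasing. It is also continuous, because the maximum of a continuous function over a continuously varying compact set is continuous. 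Hence $\{\lambda : \varphi_i(\lambda)\le \delta_i^{\mathrm{eff}}\ \forall i\}$ is a closed interval $[0,\lambda_{\mathrm{ex}}]$, using $\varphi_i(0)=0<\delta_i^{\mathrm{eff}}$. Bisection driven by an exact oracle for ``$\varphi_i(\lambda)\le\delta_i^{\mathrm{eff}}$'' therefore converges to $\lambda_{\mathrm{ex}}$, with error $2^{-50}\lambda_{\max}$ after 50 iterations. Taking the minimum over the per-joint searches gives the global threshold, since the constraint set in \eqref{eq:opt} is the intersection over $i$ and $\sigma$.

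The second step is to make the oracle exact rather than merely dense. A degree-2 polynomial on a box attains its maximum and minimum at one of finitely many candidates, obtained from the KKT conditions on each face:
\begin{itemize}
\item the four vertices;
\item on each of the four edges, the stationary point of a univariate quadratic, if it lies in the edge;
\item the interior critical point, obtained by solving the $2\times 2$ system $2B_i\Delta\bz = -A_{i,:}^\top$ when it is nonsingular and the solution lies in the box.
\end{itemize}
I would phrase the oracle as evaluating $\by^\top Q_i\by$ at these $\le 9$ closed-form points, or equivalently at a grid augmented by them. Then it returns $\varphi_i(\lambda)$ exactly, and the $21\times 21$ grid only serves as a safeguard. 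Without this augmentation the plain grid gives only a lower bound on $\varphi_i$, and thus a slight overestimate of $\lambda^\star$; so I would state the proposition for the augmented grid.

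The third step is to identify $\lambda_{\mathrm{ex}}$ with the SOS optimum $\lambda^\star_{\mathrm{SOS}}$ of \eqref{eq:opt}. The direction $\lambda^\star_{\mathrm{SOS}} \le \lambda_{\mathrm{ex}}$ follows from Theorem~\ref{thm:main}: a feasible $\lambda$ certifies $|\by^\top Q_i\by|\le\delta_i^{\mathrm{eff}}$ on the box, so $\varphi_i(\lambda)\le\delta_i^{\mathrm{eff}}$. The reverse inequality is the losslessness claim. Given $\lambda<\lambda_{\mathrm{ex}}$, I need multipliers $c_1,c_2\ge 0$ making $S_{i,\sigma}\psd 0$ for each $(i,\sigma)$; this is where I would invoke the exactness Remark for quadratics over quadratic domains.

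I expect this step to be the main obstacle. The classical lossless S-lemma covers a single quadratic constraint, whereas here there are two constraints, $g_1$ and $g_2$. What I would try first is SDP duality: the dual of the fixed-$\lambda$ feasibility problem is a moment relaxation over $Y\psd 0$ with $Y_{11}=1$ and $Y_{kk}\le\lambda^2$. I would then attempt a rank-one argument for the optimal $Y$ in dimension $3$, exploiting that the objective has only two variables, or alternatively a direct case analysis on where the box maximizer lies (vertex, edge or interior), building $c_1,c_2$ from the KKT multipliers. If rank-one cannot be guaranteed in general, the fallback is to state the proposition under the paper's exactness Remark and note that, in any case, the exact-oracle bisection yields a certified $\lambda$ that is no smaller than $\lambda^\star_{\mathrm{SOS}}$.
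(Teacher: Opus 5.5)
Your steps 1 and 2 are sound and more careful than the paper. The paper's proof never shows that the certifiable $\lambda$'s form an interval. It also treats the $21\times 21$ grid as an exact oracle, which it is not. So your KKT-augmented oracle is needed, and you are right that the plain grid overestimates $\lambda^\star$. The gap is the reverse inequality $\lambda^\star_{\mathrm{SOS}}\ge\lambda_{\mathrm{ex}}$ in step 3, and neither of your routes closes it. The rank-one route is exactly the paper's argument: lift to $Y\psd 0$, dualize to the S-procedure, apply Pataki--Barvinok. But with three affine constraints on a $3\times 3$ variable, the bound $r(r+1)/2\le 3$ only gives $r\le 2$; the paper's ``$r\le 1$'' is an arithmetic slip. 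Your fallback of invoking the exactness Remark assumes the conclusion. Yakubovich's lossless S-lemma covers one quadratic constraint, and the box needs two.

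In fact the claimed equality fails in general, so no argument can fill this step. Take a joint with $\Delta\hat\theta_i=\tfrac12(\Delta z_1+\Delta z_2)+\Delta z_1^2-\Delta z_1\Delta z_2+\Delta z_2^2$, $\delta_i^{\mathrm{eff}}=3$, and $\lambda_{\max}\ge 1$.
\begin{itemize}
\item This polynomial is convex with global minimum $-\tfrac14$. For $\lambda\in[\tfrac12,1]$ its maximal modulus on $[-\lambda,\lambda]^2$ is $3\lambda^2$, attained at $(\lambda,-\lambda)$. Hence $\lambda_{\mathrm{ex}}=1$.
\item For $\lambda\ge\tfrac14$, consider $Y=\left(\begin{smallmatrix}1&1/4&1/4\\1/4&\lambda^2&1/8-\lambda^2\\1/4&1/8-\lambda^2&\lambda^2\end{smallmatrix}\right)$. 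It is PSD, since its Schur complement is $(\lambda^2-\tfrac1{16})\left(\begin{smallmatrix}1&-1\\-1&1\end{smallmatrix}\right)$. It satisfies $\mathrm{tr}(G_k(\lambda)Y)=0$ and gives $\mathrm{tr}(Q_iY)=3\lambda^2+\tfrac18$.
\item Pairing this $Y$ with any certificate $S_{i,+}\psd 0$ forces $3\ge 3\lambda^2+\tfrac18$. Therefore $\lambda^\star_{\mathrm{SOS}}\le\sqrt{23/24}<1=\lambda_{\mathrm{ex}}$.
\end{itemize}
This $Y$ has rank $2$, which is precisely the case Pataki--Barvinok leaves open. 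What you can prove is the one-sided statement $\lambda^\star_{\mathrm{SOS}}\le\lambda_{\mathrm{ex}}$: exact-oracle bisection is never more conservative than the SOS program. Equality needs an extra hypothesis that guarantees a rank-one optimizer of the lifted SDP for the binding $(i,\sigma)$. Even the one-sided direction needs a correctly signed certificate. You invoke it through Theorem~\ref{thm:main}, but \eqref{eq:s_proc} as printed has $+c_kG_k$. Soundness, and the Lagrangian dual you would derive, require $-c_kG_k$, i.e.\ $\nu=\delta_i^{\mathrm{eff}}-\lambda^2(c_1+c_2)$ rather than $\nu=\delta_i^{\mathrm{eff}}$.
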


\begin{proof}
Fix joint $i$ and sign $\sigma \in \{+1,-1\}$.
For a given $\lambda$, the question of whether
$\sigma \cdot \by^\top Q_i \by \leq \delta_i^{\mathrm{eff}}$
for all $\Delta\bz \in [-\lambda,\lambda]^2$
can be lifted to a semidefinite program by introducing
$Y = \by\by^\top \in \mathbb{S}^3$ and relaxing $Y = \by\by^\top$
to $Y \psd 0$:
\begin{align*}
  p^\star = \max \;\; & \mathrm{tr}(\sigma Q_i \cdot Y) \\
  \text{s.t.} \;\; & Y_{00} = 1, \;
    Y_{11} \leq \lambda^2, \;
    Y_{22} \leq \lambda^2, \;
    Y \psd 0.
\end{align*}
The Lagrangian dual of this SDP, with multipliers $\nu$ for
$Y_{00}=1$ and $\mu_1, \mu_2 \geq 0$ for the box constraints,
yields precisely the S-procedure certificate \eqref{eq:s_proc}
upon identifying $\nu = \delta_i^{\mathrm{eff}}$ and
$\mu_k = c_k$.

\emph{Strong duality:}
Slater's condition holds for the primal SDP
(e.g., $Y = \frac{1}{2}\lambda^2 I$ is strictly feasible for
sufficiently small scaling), so the primal and dual optima
coincide.

\emph{SDP exactness:}
The primal has a $3 \times 3$ matrix variable with $m = 3$ affine
constraints.
By the Pataki--Barvinok rank bound \cite{barvinok1995problems}, any
optimal $Y^\star$ satisfies
$r(r{+}1)/2 \leq m = 3$, giving $r \leq 1$.
Since rank-1 feasible points exist (any $\by$ in the box gives
$Y = \by\by^\top$), the optimum is attained at rank~1, i.e.,
$Y^\star = \by^\star (\by^\star)^\top$.
Thus the SDP relaxation is exact: $p^\star$ equals the true
maximum of $\sigma \cdot \by^\top Q_i \by$ over the box.

Combining: the true maximum $= p^\star$ (SDP exactness) $=$
dual optimal (strong duality) $=$ S-procedure bound.
Therefore the S-procedure certifies
$\sigma \cdot \Delta\hat{\theta}_i \leq \delta_i^{\mathrm{eff}}$
over $[-\lambda,\lambda]^2$ \emph{if and only if} the constraint
holds.
Since both the bisection and the SOS program find the largest
$\lambda$ for which all $2n$ constraints hold, they return the
same $\lambda^\star$.
\end{proof}

\section{Experiments}
\label{sec:experiments}
 
We design our experiments to answer two research questions:
 
\smallskip
\noindent
\textbf{RQ1:} \emph{Does the SOS-verified planner eliminate
joint-limit violations compared to a standard Bug2 planner, and does
this advantage persist across a range of joint-limit settings?}
 
\noindent
\textbf{RQ2:} \emph{Does preventing violations improve task-level
outcomes---goal-reaching success, path quality, and tracking
accuracy---or merely trade safety for performance?}
 
\subsection{Experimental Setup}
\label{ssec:setup}
 
\subsubsection{Task Setting}
We evaluate on a planar 3-link revolute manipulator ($n=3$) with
link lengths $\ell_1 = 1.0$, $\ell_2 = 0.8$, $\ell_3 = 0.6\,$m.
A single circular obstacle with radius $r = 0.015\,$m is placed along
the direct start--goal path to force boundary-following through
kinematically challenging regions.
The goal tolerance is $\epsilon_{\mathrm{tol}} = 0.005\,$m and the
obstacle safety margin is $0.008\,$m.
 
To ensure a comprehensive evaluation, we test six uniform per-step
joint bounds
$\delta \in \{0.020, 0.025, 0.030, 0.035, 0.040, 0.050\}\,$rad,
spanning tight to moderate constraints.
For each~$\delta$, we generate a pool of \emph{adversarial scenarios}
via an automated search (described below) and collect up to 100
scenarios per setting.
 
\subsubsection{Adversarial Scenario Generation}
\label{ssec:scenario_gen}
 
We require scenarios where vanilla Bug2 is \emph{guaranteed} to
violate joint limits while the SOS planner can succeed.
Each candidate scenario~$(\btheta_0, \bz_{\mathrm{goal}})$ is
accepted only if all of the following hold:
(i)~the starting condition number satisfies
$2.5 \leq \kappa_0 \leq 8.0$ (moderate, not degenerate);
(ii)~$\kappa(J)$ increases by a ratio $\geq 1.6\times$ along the
straight-line path (ensuring the vanilla step
$s = \delta / \kappa_0$ eventually violates);
(iii)~$\lambda^\star > 0$ at every sampled point on the path (the SOS
planner can traverse the full distance);
(iv)~the estimated step count
$\|\Delta\bz\| / (0.75 \cdot \lambda^\star_{\min}) < 500$
(the SOS planner finishes within budget);
(v)~vanilla Bug2, when run, incurs at least one joint-limit violation.
Scenarios that pass all five filters are retained.
 
\subsubsection{Planners Under Comparison}
 
\paragraph{Vanilla Bug2 (baseline).}
Fixed Cartesian step $s = \delta / \kappa_0$.
Joint displacements are computed via the first-order pseudoinverse
$\Delta\btheta = J^\dagger \Delta\bz$.
When $\|\Delta\btheta\|_\infty > \delta$, each joint increment is
\emph{clipped} to $[-\delta, \delta]$, modeling a realistic actuator
that respects physical limits.
Maximum budget: 500~steps.
 
\paragraph{SOS-Verified Bug2 (ours).}
Adaptive step $s^{(t)} = 0.75 \cdot \lambda^\star(\btheta^{(t)})$.
Joint displacements via the second-order polynomial IK~\eqref{eq:poly_ik}.
The polynomial approximation uses $\rho = 0.008$ for error estimation;
if $\lambda^\star < 10^{-6}$, $\rho$ is halved iteratively (up to 3
retries).
If a joint bound is marginally exceeded due to numerics, the step is
scaled back by a factor $0.9 \cdot \min_i (\delta_i / |\Delta\theta_i|)$.
Maximum budget: 600~steps.
 
\subsubsection{Evaluation Metrics}
 
We report the following metrics, each averaged over all scenarios
within a $\delta$~setting:
 
\begin{itemize}
  \item \textbf{Success rate} (\%): fraction of scenarios reaching
    the goal within budget.
  \item \textbf{Violation count}: number of steps where
    $\max_i |\Delta\theta_i| > \delta$ (pre-clip for vanilla,
    pre-scale-back for SOS).
  \item \textbf{Violation rate} (\%): violation count divided by
    total steps.
  \item \textbf{Final distance}: $\|\bz_{\mathrm{final}} -
    \bz_{\mathrm{goal}}\|$ at termination.
  \item \textbf{Path length ratio}: total end-effector path length
    divided by straight-line distance (1.0 is optimal).
\end{itemize}
 
\subsection{Statistical Comparison}
\label{ssec:stats}
 
Table~\ref{tab:scenario_meta} summarizes the scenario difficulty.
Across all $\delta$ settings, the starting condition number averages
$\kappa_0 \approx 3.8$ with a $\kappa$~ratio (max/start) of
$\approx 1.7\times$.
The total number of retained scenarios ranges from 9 to 22 per
$\delta$ (94 total across all settings), reflecting the stringent
five-part acceptance criterion.
 
\begin{table}[t]
  \centering
  \caption{Scenario metadata across joint-limit settings}
  \label{tab:scenario_meta}
  \begin{tabular}{@{}cccc@{}}
    \toprule
    $\delta$ (rad) & $N$ & $\kappa_0$ (mean$\pm$std) &
      $\kappa$ ratio (mean$\pm$std) \\
    \midrule
    0.020 & 22 & $3.71 \pm 0.13$ & $1.67 \pm 0.04$ \\
    0.025 & 16 & $3.79 \pm 0.16$ & $1.70 \pm 0.07$ \\
    0.030 &  9 & $3.73 \pm 0.13$ & $1.69 \pm 0.05$ \\
    0.035 & 15 & $3.80 \pm 0.13$ & $1.71 \pm 0.05$ \\
    0.040 & 11 & $3.76 \pm 0.15$ & $1.66 \pm 0.05$ \\
    0.050 & 21 & $3.77 \pm 0.12$ & $1.67 \pm 0.05$ \\
    \bottomrule
  \end{tabular}
\end{table}
 
\subsubsection{Joint-Limit Violations}
 
Table~\ref{tab:violations} presents the core safety metric.
The SOS-verified planner achieves \textbf{zero} violations in every
scenario at every $\delta$ setting, confirming the theoretical
guarantee of Theorem~\ref{thm:main}.
In contrast, vanilla Bug2 incurs $2$--$7$ violations per scenario on
average, with the violation rate increasing from $6.5\%$ at
$\delta = 0.020$ to $11.2\%$ at $\delta = 0.050$.
Larger~$\delta$ makes the fixed step $\delta/\kappa_0$ more aggressive,
amplifying the mismatch in high-$\kappa$ regions.
 
\begin{table}[t]
  \centering
  \caption{Joint-limit violations: count and rate (mean $\pm$ std, best in \textbf{bold})}
  \label{tab:violations}
  \begin{tabular}{@{}ccccc@{}}
    \toprule
    & \multicolumn{2}{c}{Violation Count} &
      \multicolumn{2}{c}{Violation Rate (\%)} \\
    \cmidrule(lr){2-3} \cmidrule(lr){4-5}
    $\delta$ & Vanilla & SOS & Vanilla & SOS \\
    \midrule
    0.020 & $2.3 \pm 1.5$ & $\mathbf{0.0 \pm 0.0}$ &
            $6.51 \pm 4.07$ & $\mathbf{0.0 \pm 0.0}$ \\
    0.025 & $2.4 \pm 1.8$ & $\mathbf{0.0 \pm 0.0}$ &
            $8.18 \pm 5.66$ & $\mathbf{0.0 \pm 0.0}$ \\
    0.030 & $3.0 \pm 1.6$ & $\mathbf{0.0 \pm 0.0}$ &
            $8.46 \pm 4.17$ & $\mathbf{0.0 \pm 0.0}$ \\
    0.035 & $3.5 \pm 3.2$ & $\mathbf{0.0 \pm 0.0}$ &
            $9.14 \pm 4.39$ & $\mathbf{0.0 \pm 0.0}$ \\
    0.040 & $5.0 \pm 5.1$ & $\mathbf{0.0 \pm 0.0}$ &
            $8.42 \pm 4.66$ & $\mathbf{0.0 \pm 0.0}$ \\
    0.050 & $6.9 \pm 5.3$ & $\mathbf{0.0 \pm 0.0}$ &
            $11.24 \pm 5.51$ & $\mathbf{0.0 \pm 0.0}$ \\
    \bottomrule
  \end{tabular}
\end{table}
 
\subsubsection{Goal-Reaching Success and Path Quality}
 
Table~\ref{tab:success_path} reports success rate, final distance,
and path efficiency.
The SOS planner achieves \textbf{100\% success} at all $\delta$
settings.
Vanilla Bug2 reaches the goal reliably at tight limits
($\delta \leq 0.030$) because the small fixed step happens to be
close to the safe threshold, but at looser limits
($\delta \geq 0.035$) its success rate drops to 82--93\% as the
larger step induces more clipping drift.
 
The path length ratio reveals a striking contrast.
At $\delta = 0.020$--$0.025$ both planners follow near-optimal paths
(${\approx}1.2\times$ straight-line).
At $\delta = 0.040$--$0.050$, vanilla Bug2's drift inflates the path
to $8.8$--$10.3\times$ straight-line, while the SOS planner maintains
a ratio of ${\approx}1.2$--$1.5\times$.
 
\begin{table}[t]
  \centering
  \caption{Success rate, final distance, and path length ratio
    (mean~$\pm$~std; best in \textbf{bold})}
  \label{tab:success_path}
  \begin{tabular}{@{}ccccc@{}}
    \toprule
    & \multicolumn{2}{c}{Success Rate (\%)} &
      \multicolumn{2}{c}{Path Length Ratio} \\
    \cmidrule(lr){2-3} \cmidrule(lr){4-5}
    $\delta$ & Vanilla & SOS & Vanilla & SOS \\
    \midrule
    0.020 & 100.0 & \textbf{100.0} &
            $1.21 \pm 0.02$ & $\mathbf{1.17 \pm 0.02}$ \\
    0.025 & 100.0 & \textbf{100.0} &
            $1.23 \pm 0.02$ & $\mathbf{1.18 \pm 0.02}$ \\
    0.030 & 100.0 & \textbf{100.0} &
            $1.94 \pm 1.05$ & $\mathbf{1.20 \pm 0.01}$ \\
    0.035 &  93.3 & \textbf{100.0} &
            $3.85 \pm 7.48$ & $\mathbf{1.21 \pm 0.02}$ \\
    0.040 &  81.8 & \textbf{100.0} &
            $8.85 \pm 13.0$ & $\mathbf{1.22 \pm 0.03}$ \\
    0.050 &  85.7 & \textbf{100.0} &
            $10.3 \pm 15.2$ & $\mathbf{1.47 \pm 0.67}$ \\
    \bottomrule
  \end{tabular}
\end{table}
 
\subsubsection{Computational Cost}
 
The SOS-verified planner averages $0.009$--$0.018\,$s per scenario
(Table~\ref{tab:time}), corresponding to
${\approx}0.3\,$ms per planning step using the bisection procedure.
Vanilla Bug2 is faster ($<0.001\,$s) since it performs only a
matrix--vector multiply per step, but the SOS planner's
sub-millisecond overhead is well within real-time requirements for
typical manipulation tasks.
 
\begin{table}[t]
  \centering
  \caption{Computational cost and step count (best in \textbf{bold})}
  \label{tab:time}
  \begin{tabular}{@{}ccccc@{}}
    \toprule
    & \multicolumn{2}{c}{Steps} &
      \multicolumn{2}{c}{Wall Time (s)} \\
    \cmidrule(lr){2-3} \cmidrule(lr){4-5}
    $\delta$ & Vanilla & SOS & Vanilla & SOS \\
    \midrule
    0.020 & $\mathbf{35.3}$ & $52.4$ & $\mathbf{{<}0.001}$ & $0.018$ \\
    0.025 & $\mathbf{29.4}$ & $43.3$ & $\mathbf{{<}0.001}$ & $0.014$ \\
    0.030 & $38.3$ & $\mathbf{36.4}$ & $\mathbf{{<}0.001}$ & $0.012$ \\
    0.035 & $64.9$ & $\mathbf{33.1}$ & $\mathbf{{<}0.001}$ & $0.011$ \\
    0.040 & $126.8$ & $\mathbf{28.1}$ & $\mathbf{{<}0.001}$ & $0.010$ \\
    0.050 & $113.8$ & $\mathbf{27.1}$ & $\mathbf{{<}0.001}$ & $0.009$ \\
    \bottomrule
  \end{tabular}
\end{table}
 
An important observation is that for $\delta \geq 0.030$, the SOS
planner actually uses \emph{fewer steps} than vanilla Bug2.
This is because vanilla's clipping-induced drift causes it to wander
off the intended path, requiring many corrective steps, whereas the
SOS planner follows a direct, drift-free trajectory.
 
\subsection{Qualitative Analysis}
\label{ssec:qualitative}
 
Fig.~\ref{fig:comparison} illustrates a representative scenario with
$\delta = 0.035\,$rad, $\kappa_0 = 6.4$, and
$\kappa_{\max} = 34.2$ ($5.3\times$ ratio). The path difference visible in Fig.~\ref{fig:comparison} is \emph{not}
due to randomness or backtracking---Bug2 is a deterministic algorithm.
The cause is clipping-induced drift.
When the vanilla planner's pseudoinverse IK requests a joint
displacement that exceeds $\delta$, the displacement is clipped to
$[-\delta, \delta]$.
The actual end-effector position after clipping differs from what the
planner intended, so at the next step the planner computes the
goal direction from a \emph{different} current position.
This drift accumulates: each clipped step shifts the starting point
of the next, producing a trajectory that progressively deviates
from the intended path.
The SOS-verified planner, by contrast, never clips---every step
lands exactly where intended---so it follows the straight
go-to-goal / boundary-follow path that Bug2 would follow in an
ideal (unconstrained) setting.
 
\paragraph{Cartesian paths (top row).}
The vanilla planner's path (top-left) shows red segments where joint
limits were violated and subsequently clipped, producing visible kinks.
The SOS planner (top-right) follows a smooth path; the purple
rectangles show the $\lambda^\star$-boxes, which visibly shrink near
the obstacle where $\kappa(J)$ peaks.
 
\paragraph{Joint-step compliance (middle-left).}
The vanilla planner's pre-clip joint deviations (red) exceed the
dashed $\delta$-line at three points.
The SOS planner's deviations (green) remain strictly below the limit
throughout.
 
\paragraph{Adaptive $\lambda^\star$ (middle-right).}
The purple curve shows $\lambda^\star$ dropping from $0.008$ to
$0.005$ as $\kappa(J)$ (orange dashed) rises from 6 to 12, then
recovering.
This adaptation is the mechanism by which the planner avoids
violations without any explicit joint-limit checking in the planning
loop.
 
\paragraph{Arm configurations (bottom-right).}
Sampled arm poses along the SOS path show smooth, continuous motion
with no abrupt reconfigurations.
 
\begin{figure}[t]
  \centering
  \includegraphics[width=\columnwidth]{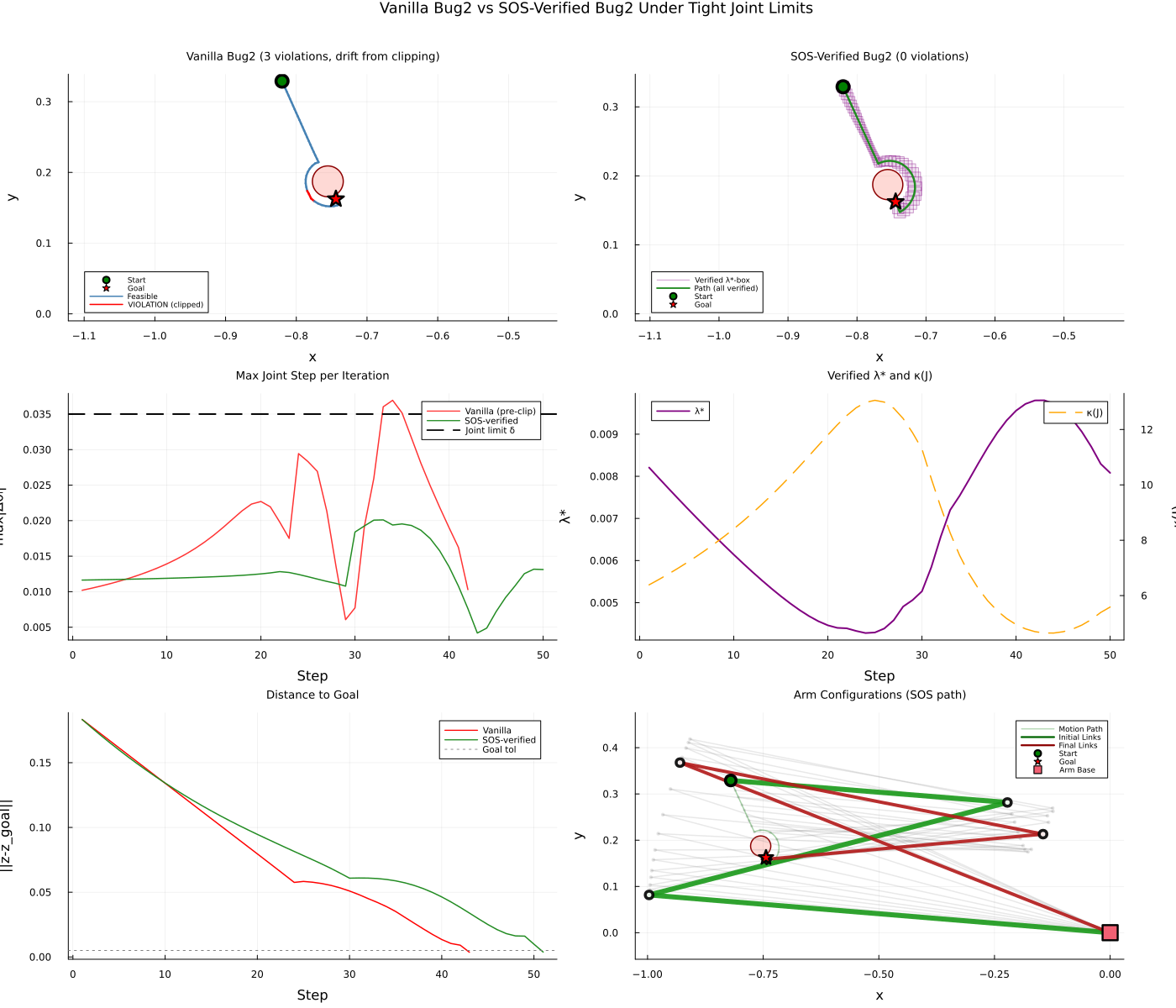}
  \caption{%
  Comparison of Vanilla Bug2 (left) and SOS-Verified Bug2 (right).
    \textbf{Top:} Cartesian paths; red segments mark steps where
    joint limits were violated and clipped (vanilla only), causing
    the trajectory to deviate from the SOS path.
    Purple boxes show the certified $\lambda^\star$-regions.
    The two planners follow \emph{different} paths because clipping
    shifts vanilla's current position at each violated step,
    altering subsequent Bug2 direction decisions
    (see text for details).
    \textbf{Middle-left:} Max joint step per iteration; the dashed
    line is $\delta = 0.035$.
    \textbf{Middle-right:} $\lambda^\star$ (purple) and $\kappa(J)$
    (orange); note the inverse relationship.
    \textbf{Bottom:} Distance to goal and sampled arm configurations.
  }
  \label{fig:comparison}
\end{figure}

\subsection{Discussion}
 
\paragraph{Answering RQ1.}
The SOS-verified planner achieves zero joint-limit violations across
\emph{all} 94 adversarial scenarios and \emph{all} six $\delta$
settings, while vanilla Bug2 violates limits in 6--11\% of steps.
The advantage is consistent and grows with~$\delta$.
 
\paragraph{Answering RQ2.}
Eliminating violations does not sacrifice performance; it improves it.
At $\delta \geq 0.035$, the SOS planner achieves higher success rates
(100\% vs.\ 82--93\%), shorter paths ($1.2\times$ vs.\
$4$--$10\times$ straight-line), and fewer steps (27--33 vs.\ 65--127).
The only cost is a modest computational overhead of ${\approx}0.3\,$ms
per step, which is negligible for typical planning rates.

\section{Conclusion}
\label{sec:conclusion}
 
We have presented an approach to task-space reactive motion planning
that provides formal joint-feasibility guarantees at every planning
step.
By computing a certified Cartesian reachable set via SOS programming
and using it to adaptively size the steps of a Bug2 planner, we
eliminate the joint-limit violations and tracking drift that arise
from fixed-step approaches.
A statistical evaluation across 94 adversarial scenarios and six
joint-limit settings confirms the theoretical guarantees:
the SOS-verified planner achieves zero joint violations and 100\%
goal reaching in all cases, whereas vanilla Bug2 violates limits in
6--11\% of steps and fails to reach the goal in up to 18\% of
scenarios at looser joint bounds.
Crucially, eliminating violations does not degrade performance; it
improves path quality and reduces step count by avoiding
clipping-induced drift.
Future work will extend the evaluation to higher-DOF systems,
integrate with sampling-based planners (e.g., RRT) for complex
3D environments, and explore online adaptation of the polynomial
approximation order.


\bibliographystyle{IEEEtran}
\bibliography{arXiv}
\end{document}